\documentclass{article} 
\usepackage[preprint]{colm2026_conference}

\usepackage{microtype}
\usepackage{hyperref}
\usepackage{url}
\usepackage{booktabs}
\usepackage{amsmath}
\usepackage{amssymb}
\usepackage[ruled,vlined]{algorithm2e}
\usepackage{amsthm}
\usepackage{multirow}
\usepackage{graphicx}
\usepackage{float}
\usepackage{lineno}

\newtheorem{theorem}{Theorem}[section]

\newtheorem{remark}{Remark}[section]

\definecolor{darkblue}{rgb}{0, 0, 0.5}
\hypersetup{colorlinks=true, citecolor=darkblue, linkcolor=darkblue, urlcolor=darkblue}

\title{WARP: Guaranteed Inner-Layer Repair of NLP Transformers}

\newcommand{\eqcontrib}[0]{\textsuperscript{*}}
\newcommand{\authorsep}[0]{\ \ }

\author{
\vspace{-1em}\\
\textbf{Hsin-Ling Hsu} \authorsep
\textbf{Min-Yu Chen}\thanks{These authors contributed equally.} \authorsep
\textbf{Nai-Chia Chen}\eqcontrib \authorsep
\\
\textbf{Yan-Ru Chen}\eqcontrib \authorsep
\textbf{Yi-Ling Chang}\eqcontrib \authorsep
\textbf{Fang Yu}
\\
Department of Management Information Systems, National Chengchi University
\\
\texttt{\{112306092,112306075,112306070,112306020,112306010,yuf\}@nccu.edu.tw}
}

\begin{document}

\ifcolmsubmission
\linenumbers
\fi

\maketitle

\begin{abstract}
Transformer-based NLP models remain vulnerable to adversarial perturbations, yet existing repair methods face a fundamental trade-off: gradient-based approaches offer flexibility but lack verifiability and often overfit; methods that do provide repair guarantees are restricted to the final layer or small networks, significantly limiting the parameter search space available for repair. We present \textbf{WARP} (Weight-Adjusted Repair with Provability), a constraint-based repair framework that extends repair beyond the last layer of Transformer models. WARP formulates repair as a convex quadratic program derived from a first-order linearization of the logit gap, enabling tractable optimization over a high-dimensional parameter space. Under the condition that the first-order approximation holds, this formulation induces three per-sample guarantees: (i) a positive margin constraint ensuring correct classification on repaired inputs, (ii) preservation constraints over a designated remain set, and (iii) a certified robustness radius derived from Lipschitz continuity. To ensure feasibility across varying model architectures, we introduce a sensitivity-based preprocessing step that conditions the optimization landscape accordingly. We further show that the iterative optimization procedure converges to solutions satisfying all repair constraints under mild assumptions. Empirical evaluation on encoder-only Transformers with varying layer architectures validates that these guarantees hold in practice while improving robustness to adversarial inputs. Our results demonstrate that guaranteed, generalizable Transformer repair is achievable through principled constraint-based optimization.
\end{abstract}

\section{Introduction}

Transformer-based models~\citep{vaswani2017attention} have become the dominant architecture for natural language processing, achieving state-of-the-art performance across a broad range of classification tasks~\citep{devlin2019bert,sanh2019distilbert}. Despite their success, deployed models remain vulnerable to adversarial perturbations: small, semantically preserving modifications to the input that nonetheless flip model predictions~\citep{textfooler,li2019textbugger,bertat}. When such failures arise in deployment, practitioners are often forced to either retrain the model from scratch or tolerate the errors. A more principled alternative is \emph{post-hoc repair}, which directly updates model parameters to correct specific failures while preserving behavior on all other inputs. Ideally, such an approach would offer verifiable guarantees on the repaired samples, without requiring retraining or access to the original training pipeline.

The most natural approach is fine-tuning or parameter-efficient variants such
as LoRA~\citep{hu2022lora}. These methods are flexible and scale to large models, but they optimize a
global objective and therefore do not explicitly control local behavior: a
gradient step aimed at fixing one failure may alter predictions on previously
correct inputs, without any formal guarantees. This issue becomes more
pronounced when the repair set is small, where overfitting can lead to poor
generalization to unseen adversarial inputs. Our experiments confirm this pattern: full-layer fine-tuning achieves
the worst attack generalization of all compared methods despite reaching 100\%
repair accuracy on the training set.

A complementary line of work seeks repair methods with formal guarantees~\citep{schumi2023semantic,tao2023architecture,prdnn}. However, these methods face a structural limitation when applied to large-scale NLP Transformers. Provable repair approaches designed for Transformer architectures have focused on the final classification layer, whose weight matrix has shape $C \times d$ where $C$ is the number of output classes, typically 2 to 5 for standard NLP benchmarks. Meanwhile, methods that operate on intermediate layers have been demonstrated only on small feed-forward networks, and do not scale to modern Transformer architectures. In both cases, the solution space is severely restricted: the limited degrees of freedom cannot accommodate a growing number of repair constraints, and even for smaller repair sets, the narrow solution space makes it difficult to correct misclassified samples without disrupting existing predictions. Existing provable methods attempt to address this by combining fine-tuning with a constraint solver~\citep{provit}, but this strategy does not transfer to NLP Transformers, as the generalization built by fine-tuning is systematically destroyed by the subsequent QP step.

\textbf{Research Question.} \textit{Can we build a repair framework that simultaneously achieves provable correctness guarantees on every repair sample, and scales to the solution space required by full-scale NLP Transformers?}

We present \textbf{WARP} (Weight-Adjusted Repair with Provability), which resolves this tension through a key
structural insight: by formulating repair as a convex quadratic program (QP)
with explicit margin constraints, correctness guarantees follow directly from
the optimization structure. The QP ensures three verifiable guarantees for
each repaired sample: (i) every repair sample achieves a positive logit gap, as the algorithm terminates only when all margin constraints are satisfied; (ii) remain-set margin constraints are enforced at every iteration; and (iii) the
resulting gap, combined with a Lipschitz bound on the network, yields a
certified per-sample robustness radius within which the predicted label is
guaranteed to remain unchanged under any perturbation.

To scale this framework beyond the final classification layer, WARP operates on the dense layer immediately preceding the final classification layer, with shape $d_{\mathrm{out}} \times
d_{\mathrm{in}}$, expanding the repair space from the class-limited $C \times
d$ final layer to a parameter space that grows with model dimension.
Generalization to unseen adversarial inputs improves with repair set size,
while the remain set constraints preserve accuracy on clean inputs, both
without relying on fine-tuning as a generalization mechanism, as confirmed
empirically. To improve feasibility across models with diverse training
conditions, we introduce the Gap Sensitivity Norm (GSN), a diagnostic that
measures how effectively weight updates in the repair layer can influence the
logit gap. When GSN is too low, the QP constraints become difficult to satisfy
regardless of the chosen update direction; GSN-FT addresses this by briefly
fine-tuning only the repair layer to raise GSN to a feasible regime before
repair begins.

The contributions of this paper are as follows:

\begin{enumerate}
\item \textbf{WARP}, a provable repair framework that extends repair beyond
the last layer of NLP Transformers, providing three verifiable per-sample guarantees:
a positive logit gap on every repaired sample (Theorem~\ref{thm:repair}), 
margin preservation on a designated remain set (Theorem~\ref{thm:remain}),
and a certified robustness radius under which the predicted label cannot be
flipped (Theorem~\ref{thm:lipschitz}).

\item \textbf{Gap Sensitivity Norm (GSN) and GSN-FT}, where GSN is an
architecture-agnostic diagnostic that measures the influence of weight updates
on the logit gap, and GSN-FT is a lightweight preprocessing step that raises
GSN to a feasible regime when it is too low, improving QP feasibility across
models with diverse training conditions.

\item \textbf{Empirical evaluation} on two encoder-only Transformers,
showing that WARP achieves 100\% repair and remain accuracy, while
outperforming gradient-based baselines on attack generalization by up to
18.8 percentage points, together with an analysis of the trade-offs between
repair, generalization, and overfitting.
\end{enumerate}

\section{Related Work}

\textbf{Adversarial Vulnerability and Gradient-based Repair.}
Transformer-based NLP models are vulnerable to adversarial perturbations, small semantically preserving input modifications that can flip predictions~\citep{alzantot2018natural,textfooler,garg2020bae}. This
vulnerability persists even in strong pretrained models on benchmarks such as
AdvGLUE~\citep{wang2021adversarial,moradi-samwald-2021-evaluating,kim2023imbert}.
Gradient-based methods such as fine-tuning can mitigate such failures, but
they optimize global objectives and do not provide explicit guarantees on
preserving previously correct behaviors.

\textbf{Model Editing and Provable Repair.}
Feed-forward network (FFN) layers have been shown to act as key-value
memories~\citep{geva2021transformer}, motivating localized editing methods
such as KnowledgeEditor, MEND, ROME, and
MEMIT~\citep{de2021editing,mitchellfast,meng2022locating,mengmass}.
These approaches enable targeted parameter updates, but are heuristic in
nature and do not provide formal correctness guarantees.

A complementary line of work formulates repair as constrained optimization
with formal guarantees~\citep{usman2021nnrepair,prdnn,fusound,tao2023architecture,sun2025autoric}.
Recent methods such as PRoViT~\citep{provit} demonstrate strong empirical
performance, but existing approaches are typically applied to the final
classification layer, whose solution space is limited by the number of output
classes. Extending provable repair beyond the final layer of Transformer models remains an open challenge.

\section{Methodology}
\label{sec:method}

\subsection{Overview}

We propose \textbf{WARP}, a provable repair framework that updates a
dense layer preceding the final classification layer of a Transformer model.

At a high level, WARP seeks a minimal weight modification, optionally restricted to a low-rank subspace such that:
(1) the predictions of repair samples are corrected by making their
classification margins positive,
(2) the predictions of remain samples are preserved, and
(3) each repaired sample is accompanied by a verifiable guarantee.

To achieve this, we approximate the effect of weight updates using a
first-order linearization and formulate the repair problem as a quadratic
program (QP) that ensures both correction and preservation constraints.
The update is solved iteratively, and guarantees are derived from the
resulting margins.

\subsection{Problem Setup and Repair Layer Selection}

\paragraph{Notation summary.}
Table~\ref{tab:notation} collects the principal symbols introduced in
this section for convenient reference.

\begin{table}[t]
\centering
\small
\caption{Principal notation used in \S\ref{sec:method}.}
\label{tab:notation}
\resizebox{\textwidth}{!}{%
\begin{tabular}{@{}lll@{\qquad}lll@{}}
\toprule
Symbol & Shape & Description
  & Symbol & Shape & Description \\
\midrule
$v(x)$ & $\mathbb{R}^{d_{\text{in}}}$ & Input embedding to repair layer
  & $U^{(t)}$ & $\mathbb{R}^{d_{\text{out}}\times r}$ & Top-$r$ left singular vectors of $W^{(t)}$ \\
$W$ & $\mathbb{R}^{d_{\text{out}}\times d_{\text{in}}}$ & Repair-layer weight (only modified param.)
  & $B$ & $\mathbb{R}^{r\times d_{\text{in}}}$ & Coefficient matrix for low-rank update \\
$b$ & $\mathbb{R}^{d_{\text{out}}}$ & Repair-layer bias (fixed)
  & $\beta$ & $\mathbb{R}^{rd_{\text{in}}}$ & Vectorisation of $B$; QP decision variable \\
$W_c$ & $\mathbb{R}^{C\times d_{\text{out}}}$ & Classification head (fixed)
  & $\kappa(v)$ & scalar & Gap Sensitivity Norm (GSN) \\
$b_c$ & $\mathbb{R}^{C}$ & Classification bias (fixed)
  & $\gamma_s,\gamma_h$ & scalar & Repair / remain gap margins \\
  $\widetilde{W}$ & $\mathbb{R}^{d_{\text{out}}\times d_{\text{in}}}$ & Final converged repair-layer weight
  & $\rho,\lambda$ & scalar & Regularisation / slack-penalty coeff. \\
\bottomrule
\end{tabular}}
\end{table}

Let a Transformer classifier $f_\theta$ be decomposed into three
consecutive computational modules:
\begin{equation}
  f_\theta(x) = f_{\mathrm{cls}} \circ \sigma \circ f_W\bigl(v(x)\bigr).
\end{equation}
Here, $v(x) \in \mathbb{R}^{d_{\text{in}}}$ denotes the input embedding
of the selected repair layer, determined by the preceding Transformer
blocks and kept fixed during repair. The repair target layer is
parameterized by $W \in \mathbb{R}^{d_{\text{out}} \times d_{\text{in}}}$
(the \emph{only parameter modified} by the repair procedure) and
$b \in \mathbb{R}^{d_{\text{out}}}$ (kept fixed).
The activation function $\sigma(\cdot)$
is an element-wise nonlinearity applied after the repair layer. The final
classification layer is parameterized by
$W_c \in \mathbb{R}^{C \times d_{\text{out}}}$ and
$b_c \in \mathbb{R}^{C}$, where $C$ is the number of classes, and these
parameters remain fixed during repair.

The repair layer output is
$h = \sigma(Wv + b) \in \mathbb{R}^{d_{\text{out}}}$,
the final logit vector is
$s = W_c h + b_c \in \mathbb{R}^{C}$,
and the predicted label is
$\hat{y} = \arg\max_c\, s_c$.

The final classification layer has shape $C \times d_{\text{out}}$, so its
solution space is capped by the class count $C$ (e.g.\ 2 for binary
tasks) and becomes overconstrained as the repair set grows. We therefore target the dense layer immediately preceding the classification
layer, whose $d_{\text{out}} \times d_{\text{in}}$ free variables scale with
model dimension rather than class count.

\subsection{First-Order Approximation and Gap Sensitivity Norm}
\label{sec:gsn}

Let $z = W^{(0)}v + b$ denote the pre-activation vector evaluated at the
post-GSN-FT weights $W^{(0)}$ (defined in \S\ref{sec:gsnft}).
For a sufficiently small perturbation $\Delta W$, the activation output
satisfies
\begin{equation}
  \sigma\bigl((W^{(0)}+\Delta W)v+b\bigr)
  \approx \sigma(z) + J^{(0)}(v)\,\Delta W\,v,
\end{equation}
where
$J^{(0)}(v)=\operatorname{diag}(\sigma'(W^{(0)}v+b))
\in\mathbb{R}^{d_{\text{out}}\times d_{\text{out}}}$
is the diagonal Jacobian (a diagonal matrix whose entries are the
element-wise activation derivatives) evaluated at $W^{(0)}$.
Under this approximation the logit-gap change induced by $\Delta W$ is
locally linear, decomposing into contributions from $W_c$, $J^{(0)}(v)$,
and the repair subspace directions.

To quantify this sensitivity, let $U^{(0)}\in\mathbb{R}^{d_{\text{out}}\times r}$
be the top-$r$ left singular vectors of $W^{(0)}$ (the principal
directions of the current weight matrix, which define the admissible
update subspace), and let
$\hat{c} = \arg\max_{k \neq y} s_k(v;\,W^{(0)})$ denote the strongest
competing class. Define the directional coefficient for basis vector $j$
and competing class $k\neq y$ as
\begin{equation}
  q^{(j)}[k]
  \;=\; (w_y - w_k)^\top J^{(0)}(v)\,U^{(0)}_{:,j},
  \label{eq:dir_coeff}
\end{equation}
where $w_c$ denotes the $c$-th row of $W_c$.
The \textbf{Gap Sensitivity Norm} (GSN) is
\begin{equation}
  \kappa(v)
  \;=\; \bigl\|\bigl[q^{(1)},\ldots,q^{(r)}\bigr]\bigr\|_F.
  \label{eq:gsn}
\end{equation}

\paragraph{Intuition.}
$\kappa(v)$ measures how strongly admissible weight updates can push the
model to change its prediction (i.e., increase the logit gap).
When $\kappa(v)\approx 0$, such updates have almost no effect on the
decision, making the QP constraints difficult to satisfy and motivating
the preprocessing step.

\subsection{Architecture-Dependent Preprocessing (GSN-FT)}
\label{sec:gsnft}

When $\kappa$ is small, the linearized QP constraints are difficult to
satisfy. GSN-FT addresses this by fine-tuning only the repair layer and
classification head
($\Theta_{\mathrm{FT}}=\{W,b,W_c,b_c\}$) on the standard task loss
while keeping all preceding blocks fixed, then selecting the checkpoint
that maximizes $\kappa$:
\begin{equation}
  W^{(0)} = \arg\max_{t\in[T_{\mathrm{FT}}]} \kappa_t.
  \label{eq:gsn_select}
\end{equation}
If $\kappa$ is already sufficiently large at initialization, step-0 is
selected and the weights are unchanged. Otherwise, fine-tuning drives
$\kappa$ to a feasible regime within $T_{\mathrm{FT}}$ steps.
GSN-FT fine-tunes on a held-out auxiliary set $\mathcal{D}_{\mathrm{aux}}$
of $N_{\mathrm{aux}}$ samples disjoint from the repair, remain, and evaluation sets. Consequently, any incidental change to repair or
remain predictions during this phase is neither intended nor attributed to
the repair procedure; correctness on those sets is determined entirely by
the subsequent QP~(\S\ref{sec:qp_formulation}).

\subsection{Quadratic Programming Repair}

We now describe the core repair procedure. We first introduce the
low-rank parameterization that restricts the weight update to a compact
subspace~(\S\ref{sec:low_rank}), then formulate the repair step as a
convex quadratic program~(\S\ref{sec:qp_formulation}), and finally
present the iterative algorithm and its convergence
criterion~(\S\ref{sec:iterative}). The iterative repair loop is
initialized from $W^{(0)}$, the output of GSN-FT
(\S\ref{sec:gsnft}).

\subsubsection{Weight Perturbation Parameterization}
\label{sec:low_rank}

We parameterize $\Delta W$ within the rank-$r$ subspace of $W^{(t)}$.
Let $U^{(t)}=[u_1,\ldots,u_r]\in\mathbb{R}^{d_{\text{out}}\times r}$ be
the top-$r$ left singular vectors of $W^{(t)}$; the update takes the form
\begin{equation}
  \Delta W = U^{(t)} B, \label{eq:low_rank_update}
\end{equation}
where $B\in\mathbb{R}^{r\times d_{\text{in}}}$ is the coefficient matrix
and $\beta=\operatorname{vec}(B)\in\mathbb{R}^{rd_{\text{in}}}$ is its
vectorization (the decision vector of the QP). Setting
$r=d_{\text{out}}$ recovers a full-rank update; smaller $r$ reduces
solver complexity at some cost in expressiveness.

\subsubsection{QP Formulation}
\label{sec:qp_formulation}

\paragraph{Gap and linearization.}
For a classifier with $C$ classes and target label $y \in [C]$, the
\emph{decision gap} is defined as the margin between the target class
score and the highest competing class score:
\begin{equation}
  \mathrm{gap}(v, y;\, W)
  \;=\; s_y(v;\, W)
        \;-\; \max_{c \neq y}\, s_c(v;\, W),
  \label{eq:gap}
\end{equation}
where $s_c(v;\, W)$ denotes the logit of class $c$ produced from
representation $v$ under weight $W$. A positive gap indicates that the
model correctly favors class $y$ over all other classes, and a larger gap
corresponds to a more confident prediction.

Since the gap is nonlinear in $W$ due to the activation function
$\sigma$, we linearize it around the current iterate $W^{(t)}$. Let
$\hat{c}_i = \arg\max_{c \neq y_i} s_c(v_i;\, W^{(t)})$ denote the
strongest competing class at the current iterate, consistent with the
competing class introduced in \S\ref{sec:gsn}. Using the first-order
expansion with Jacobian
$J^{(t)}(v) = \mathrm{diag}(\sigma'(W^{(t)} v + b))$, the gap increment
due to $\Delta W = U^{(t)} B$ can be written as
\begin{equation}
\mathrm{gap}(v_i, y_i;\, W^{(t)} + \Delta W)
\;\approx\;
\mathrm{gap}(v_i, y_i;\, W^{(t)})
\;+\; \langle a_i,\, \beta \rangle,
  \label{eq:gap_linear}
\end{equation}
where the linearized gap coefficient vectors are defined as
\begin{equation}
a_i \;=\; \operatorname{vec}\!\Bigl(
    \bigl[U^{(t)\top} J^{(t)}(v_i)\,(w_{y_i} - w_{\hat{c}_i})\bigr] v_i^\top
  \Bigr) \;\in\mathbb{R}^{rd_{\text{in}}},
  \label{eq:ai_def}
\end{equation}
for repair samples, and similarly for remain samples
\begin{equation}
p_p \;=\; \operatorname{vec}\!\Bigl(
    \bigl[U^{(t)\top} J^{(t)}(v_p)\,(w_{\hat{y}_p^{(0)}} - w_{\hat{c}_p})\bigr] v_p^\top
  \Bigr) \;\in\mathbb{R}^{rd_{\text{in}}},
  \label{eq:pp_def}
\end{equation}
where
$\hat{c}_p = \arg\max_{c \neq \hat{y}_p^{(0)}} s_c(v_p;\, W^{(t)})$
is the strongest competing class for remain sample $p$, and $w_c$
denotes the $c$-th row of $W_c$.

\paragraph{Optimization problem.}
Let $\mathcal{S} = \{(v_i, y_i)\}_{i=1}^{n_s}$ be the \emph{repair set}
and $\mathcal{P} = \{(v_p, \hat{y}_p^{(0)})\}_{p=1}^{n_p}$ be the
\emph{remain set}, where $\hat{y}_p^{(0)}$ denotes the model's predicted
label for sample $v_p$ under the post-GSN-FT weights $W^{(0)}$. We
introduce slack variables $\xi_i \geq 0$ to allow soft satisfaction of
repair constraints. The repair step solves the following convex quadratic
program:
\begin{align}
  \min_{\beta,\,\xi} \quad
  & \frac{\rho}{2}\|\beta\|_2^2
    + \lambda \sum_{i=1}^{n_s} \xi_i
  \label{eq:qp_obj} \\[4pt]
  \text{s.t.} \quad
  & \langle a_i,\, \beta \rangle + \xi_i
    \;\geq\; \gamma_s - \mathrm{gap}(v_i,\, y_i;\, W^{(t)}),
    \quad \forall\, i \in [n_s]
    \tag{C1} \label{eq:c1} \\
  & \langle p_p,\, \beta \rangle
    \;\geq\; \gamma_h - \mathrm{gap}(v_p,\, \hat{y}_p^{(0)};\, W^{(t)}),
    \quad \forall\, p \in [n_p]
    \tag{C2} \label{eq:c2} \\
  & \xi_i \;\geq\; 0,
    \quad \forall\, i \in [n_s]
    \tag{C3} \label{eq:c3}
\end{align}
where $\rho > 0$ is the regularization coefficient, $\lambda > 0$ is the
slack penalty, $\gamma_s > 0$ is the target margin for the repair set,
and $\gamma_h > 0$ is the lower-bound margin for the remain set. Because
the quadratic term is positive semidefinite and all constraints are
linear, problem~\eqref{eq:qp_obj}--\eqref{eq:c3} is a \emph{convex
quadratic program}, solvable in polynomial time.

\subsubsection{Iterative Repair Algorithm}
\label{sec:iterative}
The linearization in Eq.~\eqref{eq:gap_linear} is locally accurate
around $W^{(t)}$, so a single QP solve may be insufficient when a large
repair is required. We therefore adopt an \emph{iterative} strategy:
after each QP step, the basis $U^{(t)}$ is recomputed from the updated
weight, and the process repeats until convergence (see
Algorithm~\ref{alg:repair} in Appendix~\ref{app:algorithm}).
The following results formalize the guarantees induced by the QP
constraints and the convergence criterion of
Algorithm~\ref{alg:repair}; proofs are deferred to
Appendix~\ref{app:proofs}.

\begin{theorem}[Repair Set Gap]
\label{thm:repair}
If Algorithm~\ref{alg:repair} converges before iteration $T$, then
$\mathrm{gap}(v_i, y_i;\,\widetilde{W})\geq\gamma_s$
for all $i\in\mathcal{S}$.
\end{theorem}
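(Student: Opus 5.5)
The argument is essentially definitional: it rests on the convergence criterion of Algorithm~\ref{alg:repair}, which we have not yet seen in full. From the main text, the algorithm terminates only when all margin constraints are satisfied. We will take ``converges'' to mean that at some iteration $t^\star < T$ the algorithm's stopping test passes. That test is evaluated on the \emph{true}, non-linearized gaps at the current iterate:
\begin{equation*}
\mathrm{gap}(v_i, y_i;\, W^{(t^\star)}) \geq \gamma_s \quad \text{for all } i \in [n_s],
\end{equation*}
possibly together with the remain-set condition. The algorithm then returns $\widetilde{W} = W^{(t^\star)}$. The plan is therefore to make this reading of Algorithm~\ref{alg:repair} explicit and to verify that the returned weight is exactly the iterate at which the test was evaluated.

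The key steps are as follows.
\begin{enumerate}
\item State the stopping rule precisely. The algorithm returns at the first $t$ for which the exact gaps, computed by a forward pass through $\sigma$, $W_c$ and $b_c$ with weight $W^{(t)}$, satisfy $\mathrm{gap}(v_i,y_i;W^{(t)})\geq\gamma_s$ for every repair sample.
\item Observe that no QP step is applied after this test succeeds. Hence $\widetilde{W}=W^{(t^\star)}$, and the inequality transfers verbatim.
\item Note that convergence before iteration $T$ rules out the other exit path, in which the loop stops because it hit the iteration cap. On that path no guarantee is claimed.
\end{enumerate}
Because the gap is the exact quantity in Eq.~\eqref{eq:gap}, the conclusion holds regardless of how accurate the linearization in Eq.~\eqref{eq:gap_linear} was at earlier steps, and regardless of the slack variables $\xi_i$. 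The slack only affects the intermediate QP solutions, never the final check.

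The main obstacle is ensuring that the termination criterion really is stated on the exact gaps and not on the linearized surrogate $\mathrm{gap}(v_i,y_i;W^{(t)}) + \langle a_i,\beta\rangle + \xi_i$. If the criterion is instead phrased as ``all slacks $\xi_i = 0$ at the QP solution,'' the argument above fails as stated. In that case I would first try to argue that the algorithm recomputes the exact gaps after the update before deciding to stop. Failing that, I would add the hypothesis that the first-order approximation is exact up to a remainder, bounded in magnitude by the slack between $\gamma_s$ and the achieved linear margin. This second route is much weaker and is the reason the abstract conditions its guarantees on the first-order approximation holding. I would check Algorithm~\ref{alg:repair} to select the correct branch and present the direct argument if it applies.
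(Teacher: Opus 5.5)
Your proposal is correct and is essentially the paper's proof: Algorithm~\ref{alg:repair} evaluates the exact gaps $\mathrm{gap}(v_i,y_i;W^{(t)})\geq\gamma_s$ immediately after the update and then breaks, returning $\widetilde{W}=W^{(t)}$, so your direct branch applies verbatim and the fallback via a linearization-remainder hypothesis is never needed. The stopping test involves only the repair set and not the remain set, but this does not affect your argument.
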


\begin{theorem}[Remain Set Gap]
\label{thm:remain}
At any iterate $W^{(t)}$ produced by Algorithm~\ref{alg:repair},
if the QP constraints~\eqref{eq:c2} are satisfied, then
$\mathrm{gap}(v_p,\,\hat{y}_p^{(0)};\,W^{(t)}) \geq \gamma_h$
for all $p \in \mathcal{P}$.
\end{theorem}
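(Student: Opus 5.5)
The plan is to read the statement as a fact about the verification step of Algorithm~\ref{alg:repair}, not about the linearized QP. The point to be careful about is what ``the QP constraints~\eqref{eq:c2} are satisfied'' means at an iterate $W^{(t)}$. If it only meant that $\beta$ satisfies the linear inequality $\langle p_p,\beta\rangle \ge \gamma_h - \mathrm{gap}(v_p,\hat{y}_p^{(0)};W^{(t-1)})$, then Eq.~\eqref{eq:gap_linear} would bound only the linearized gap, and the remainder of the first-order expansion could push the true gap below $\gamma_h$. The theorem is stated for the true gap at $W^{(t)}$, so I will take the hypothesis to be the check the algorithm performs after each update. Concretely, a step is accepted, and its iterate returned, only when the constraint is met with the exact gap evaluated at the new weight.

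The key steps, in order, are as follows.

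\textbf{(i)} Recall the accepted-step condition from Algorithm~\ref{alg:repair}. After solving the QP at $W^{(t-1)}$ and forming $W^{(t)} = W^{(t-1)} + U^{(t-1)}B$, the algorithm re-evaluates the logits $s(v_p;W^{(t)})$ by an exact forward pass. It accepts $W^{(t)}$ only if \eqref{eq:c2} holds with the gap term refreshed at the new point. Equivalently, it checks that setting $\beta=0$ at the new linearization point $W^{(t)}$ is feasible for \eqref{eq:c2}, which reads $0 \ge \gamma_h - \mathrm{gap}(v_p,\hat{y}_p^{(0)};W^{(t)})$.

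\textbf{(ii)} Rearranging gives $\mathrm{gap}(v_p,\hat{y}_p^{(0)};W^{(t)}) \ge \gamma_h$ for each $p$.

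\textbf{(iii)} By Eq.~\eqref{eq:gap}, this gap equals $s_{\hat y_p^{(0)}} - \max_{c\ne \hat y_p^{(0)}} s_c$. This confirms that the quantity constrained is the true nonlinear gap, with the competing class $\hat{c}_p$ recomputed at $W^{(t)}$ and therefore actually the maximizer. The recomputation is what lets the pointwise inequality against $\hat c_p$ control the max over all $c\neq \hat y_p^{(0)}$.

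\textbf{(iv)} For $t=0$, I will note that $\hat y^{(0)}_p$ is by definition the argmax under $W^{(0)}$, so the gap is positive. It is at least $\gamma_h$ if the remain set is filtered accordingly, and otherwise the hypothesis supplies it directly.

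The main obstacle is the modeling gap just described. If the algorithm only enforces the linearized version, a bound on the remainder is needed. The fallback would use a Taylor bound: for $\sigma$ with $L$-Lipschitz derivative, the remainder is at most $\tfrac{L}{2}\|w_y-w_c\|_2\,\|\Delta W v\|_2^2$. One would then show that the QP's $\rho\|\beta\|^2$ term keeps the step small enough that the margin slack absorbs it. I would first try to settle the issue by reading the acceptance test in Algorithm~\ref{alg:repair}, since the ``at any iterate $\ldots$ if satisfied'' phrasing strongly suggests a direct-evaluation check, which makes the proof the short rearrangement in (i)--(iii).
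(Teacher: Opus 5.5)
\paragraph{There is a genuine gap: step (i) relies on a check Algorithm~\ref{alg:repair} does not perform.} After the update $W^{(t)} \leftarrow W^{(t-1)} + U^{(t)}\operatorname{reshape}(\beta^*, r, d_{\mathrm{in}})$, the only check is the repair-set break condition $\mathrm{gap}(v_i, y_i; W^{(t)}) \geq \gamma_s$. No remain-set gap is re-evaluated, and no step is ever rejected.

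Suppose you instead move this check into the hypothesis, reading it as ``$\beta = 0$ is feasible for \eqref{eq:c2} linearized at $W^{(t)}$''. That hypothesis is literally $\mathrm{gap}(v_p, \hat{y}_p^{(0)}; W^{(t)}) \geq \gamma_h$, so the theorem becomes a tautology and says nothing about the QP.

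The intended hypothesis is that the step $\beta$, solved at the linearization point $W^{(t-1)}$, satisfies \eqref{eq:c2}. The paper's proof is then a one-line substitution of Eq.~\eqref{eq:gap_linear}. The quantity $\mathrm{gap}(v_p, \hat{y}_p^{(0)}; W^{(t-1)}) + \langle p_p, \beta \rangle$ is the first-order model of $\mathrm{gap}(v_p, \hat{y}_p^{(0)}; W^{(t)})$, so \eqref{eq:c2} is exactly the linearized form of the conclusion. Remark~\ref{rem:remain} states explicitly that the theorem is meant only in this first-order sense.

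\paragraph{Your fallback cannot recover the exact inequality.} Your diagnosis of the modeling gap is correct; it is what that remark concedes. But the Taylor route does not close it, for four reasons.
\begin{itemize}
\item Constraint \eqref{eq:c2} has no slack variable. When it is active, the linearized gap equals $\gamma_h$ exactly, so any negative higher-order remainder $R_p$ puts the true gap below $\gamma_h$.
\item The $\rho\|\beta\|_2^2$ term only selects the minimum-norm feasible step. It gives no a priori bound on $\|\Delta W v_p\|_2$, since the step size is set by the repair constraints. A Taylor argument therefore yields at best $\mathrm{gap}(v_p, \hat{y}_p^{(0)}; W^{(t)}) \geq \gamma_h - |R_p|$.
\item The Lipschitz-$\sigma'$ hypothesis fails for the ReLU used in DistilBERT.
\item For $C > 2$, the competitor $\hat{c}_p$ fixed at $W^{(t-1)}$ need not remain the maximizer at $W^{(t)}$.
\end{itemize}

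The true-gap version is in fact false for the algorithm as specified. Table~\ref{tab:certificates} reports that only 798/800 and 786/800 SST-2 remain samples meet $\gamma_h$ after repair.

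The correct proof is therefore the paper's first-order substitution, with the linearization stated as an assumption. An exact guarantee would require changing the method, not the proof. For example, you could inflate $\gamma_h$ in \eqref{eq:c2} by a remainder bound, or add a genuine post-step check with rejection.
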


\begin{remark}
\label{rem:remain}
Theorem~\ref{thm:remain} holds under the first-order approximation of
Eq.~\eqref{eq:gap_linear}: constraint~\eqref{eq:c2} enforces the
linearized gap to be at least $\gamma_h$, which equals the true gap
up to higher-order terms in $\|\Delta W\|$.
The iterative re-linearization in Algorithm~\ref{alg:repair} reduces
the approximation error at each step; the guarantee is verified exactly
on the true network after convergence (see Table~\ref{tab:proximity}).
\end{remark}

\subsubsection{Local Robustness Certificate}
\label{sec:robustness}
Once the repair procedure converges, every sample $v_i \in \mathcal{S}$
satisfies $\mathrm{gap}(v_i, y_i;\,\widetilde{W}) \geq \gamma_s > 0$.
We now derive a standard margin-based robustness certificate for the
repaired network. Following~\citet{tsuzuku2018lipschitz}, the margin
together with a Lipschitz bound yields a per-sample robustness radius.

\paragraph{Lipschitz bound on the gap.}
Assume the activation function $\sigma$ is Lipschitz continuous with
constant $L_\sigma$. Then the gap function satisfies
\[
  \bigl|\mathrm{gap}(v + \delta,\, y;\,\widetilde{W})
        - \mathrm{gap}(v,\, y;\,\widetilde{W})\bigr|
  \;\leq\; L\,\|\delta\|_2
  \quad \forall\, \delta,
\]
where the Lipschitz constant of the composed network is bounded by
\begin{equation}
  L \;=\; \|\widetilde{W}\|_2 \cdot \|W_c\|_2 \cdot L_\sigma.
  \label{eq:lipschitz}
\end{equation}
This bound follows from the standard Lipschitz composition
rule~\citep{virmaux2018lipschitz, bartlett2017spectrally}. Most common activation functions, including ReLU, tanh, sigmoid, and
softplus, satisfy $L_\sigma = 1$~\citep{virmaux2018lipschitz}; this
holds for all activations used in our experiments, so the bound
simplifies to $L = \|\widetilde{W}\|_2 \cdot \|W_c\|_2$.

\begin{theorem}[Lipschitz Robustness Radius]
\label{thm:lipschitz}
For any repaired sample $v_i \in \mathcal{S}$, define
\begin{equation}
  \varepsilon_i^*
  \;=\; \frac{\mathrm{gap}(v_i,\, y_i;\,\widetilde{W})}{2L}.
  \label{eq:eps_star}
\end{equation}
Then for any perturbation $\delta$ with $\|\delta\|_2 \leq
\varepsilon_i^*$, the predicted label of the repaired classifier at
$v_i + \delta$ remains $y_i$.
\end{theorem}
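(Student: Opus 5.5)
The plan is the standard margin argument of \citet{tsuzuku2018lipschitz}, applied to the Lipschitz bound on the gap stated just before the theorem. Fix a repaired sample $v_i\in\mathcal{S}$ and a perturbation $\delta$ with $\|\delta\|_2\le\varepsilon_i^*$. By Theorem~\ref{thm:repair}, $\mathrm{gap}(v_i,y_i;\widetilde W)\ge\gamma_s>0$, so $\varepsilon_i^*>0$ is well defined; I will assume $L>0$, since if $L=0$ the logits are constant in $v$ and the claim is trivial. The goal is to show that $\mathrm{gap}(v_i+\delta,y_i;\widetilde W)>0$, which is exactly the statement that $y_i$ remains the unique argmax of the logits.

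\textbf{Step 1: Lipschitz bound on each logit difference.} Rather than relying only on the stated gap bound, I would prove a per-pair bound, because the max over $c\ne y$ is where the factor $2$ enters. For any fixed class $c$,
\[
s_c(v+\delta)-s_c(v)=w_c^\top\bigl(\sigma(\widetilde W(v+\delta)+b)-\sigma(\widetilde Wv+b)\bigr).
\]
Using $|w_c^\top u|\le\|w_c\|_2\|u\|_2\le\|W_c\|_2\|u\|_2$ and the fact that an elementwise $L_\sigma$-Lipschitz $\sigma$ is $L_\sigma$-Lipschitz in $\ell_2$, this gives $|s_c(v+\delta)-s_c(v)|\le \|W_c\|_2L_\sigma\|\widetilde W\|_2\|\delta\|_2=L\|\delta\|_2$. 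Hence each logit moves by at most $L\|\delta\|_2$.

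\textbf{Step 2: Conclude the margin is preserved.} For every $c\ne y_i$,
\[
s_{y_i}(v_i+\delta)-s_c(v_i+\delta)\ \ge\ s_{y_i}(v_i)-s_c(v_i)-2L\|\delta\|_2\ \ge\ \mathrm{gap}(v_i,y_i;\widetilde W)-2L\varepsilon_i^*.
\]
By the definition~\eqref{eq:eps_star} of $\varepsilon_i^*$, the right-hand side is $\ge 0$. Taking the minimum over $c$ gives a nonnegative gap at $v_i+\delta$. Alternatively, one can invoke the stated gap Lipschitz bound directly: it yields $\mathrm{gap}(v_i+\delta)\ge \mathrm{gap}(v_i)-L\|\delta\|_2\ge \mathrm{gap}(v_i)/2>0$. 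Either route gives a strictly positive gap for the stated radius, so the predicted label is $y_i$.

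\textbf{Main obstacle.} The only delicate point is the boundary case. Via the per-logit route, equality $\|\delta\|_2=\varepsilon_i^*$ gives gap $\ge0$, which allows a possible tie. I would resolve this with the gap-level bound, which gives gap $\ge\mathrm{gap}/2>0$. That bound itself follows from Step 1 together with the observation that the function $v\mapsto\max_{c\ne y}s_c(v)$ is $L$-Lipschitz, since it is a max of $L$-Lipschitz functions. The gap is therefore $2L$-Lipschitz, and under the paper's stated $L$-bound the factor $2$ leaves slack. To be safe, I would present the proof via the $2L$ per-pair argument and handle the boundary by noting strictness whenever $\|\delta\|_2<\varepsilon_i^*$. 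I would also remark that ties at exactly the boundary are resolved in favor of $y_i$ only under the tighter gap-level bound.
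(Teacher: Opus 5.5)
Your Step 1 is correct, and the ``alternative'' route in your Step 2 is exactly the paper's proof. The paper takes the gap-level bound $|\mathrm{gap}(v+\delta,y;\widetilde W)-\mathrm{gap}(v,y;\widetilde W)|\le L\|\delta\|_2$, stated before the theorem, as given and concludes $\mathrm{gap}(v_i+\delta,y_i;\widetilde W)\ge \mathrm{gap}(v_i,y_i;\widetilde W)/2>0$.

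The gap in your proposal is the boundary $\|\delta\|_2=\varepsilon_i^*$, which the theorem includes. The route you commit to bounds each logit separately, which costs $2L$ per pair, and gives only $\mathrm{gap}\ge 0$ there. That allows a tie, so it does not show the predicted label is $y_i$. You also cannot fall back on the $L$-bound. Step 1 plus ``a max of $L$-Lipschitz functions is $L$-Lipschitz'' yields $2L$, not $L$, so your claim that the $L$-bound ``follows from Step 1'' is contradicted by your very next sentence. Your suspicion is in fact warranted, because that stated bound does not hold in general. Take $d_{\mathrm{in}}=d_{\mathrm{out}}=1$, $\widetilde W=1$, $b=0$, $\sigma=\mathrm{ReLU}$, $W_c=(1,-1)^\top$, $b_c=0$ and $y=1$. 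Then $L=\|W_c\|_2=\sqrt2$, but for $v>0$ the gap equals $2v$, whose slope $2$ exceeds $L$.

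The missing step is to bound the logit \emph{difference} directly rather than each logit separately. Write $s_y-s_c=(w_y-w_c)^\top h+\mathrm{const}$. With standard basis vectors $e_y,e_c\in\mathbb{R}^C$, you have $\|w_y-w_c\|_2=\|W_c^\top(e_y-e_c)\|_2\le\sqrt2\,\|W_c\|_2$. So each pairwise margin is $\sqrt2\,L$-Lipschitz. The gap is the minimum over $c\neq y$ of these margins, so it is $\sqrt2\,L$-Lipschitz too. Hence for every $\|\delta\|_2\le\varepsilon_i^*$,
\[
\mathrm{gap}(v_i+\delta,y_i;\widetilde W)\;\ge\;\mathrm{gap}(v_i,y_i;\widetilde W)-\sqrt2\,L\,\varepsilon_i^*\;=\;\Bigl(1-\tfrac{1}{\sqrt2}\Bigr)\,\mathrm{gap}(v_i,y_i;\widetilde W)\;>\;0,
\]
using $\mathrm{gap}(v_i,y_i;\widetilde W)\ge\gamma_s>0$. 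This proves the theorem on the closed ball. The factor $2$ in the definition of $\varepsilon_i^*$ absorbs the $\sqrt2$, so the statement is true even though the paper's one-line proof relies on a Lipschitz constant for the gap that is too small by up to $\sqrt2$.
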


\paragraph{Lower bound on $\varepsilon_i^*$.}
Since the repair enforces
$\mathrm{gap}(v_i, y_i;\,\widetilde{W}) \geq \gamma_s$
for all $i \in \mathcal{S}$, the robustness radius is uniformly
lower-bounded by
\[
  \varepsilon_i^* \;\geq\; \frac{\gamma_s}{2L}.
\]
Thus, in addition to correcting misclassified samples, the repair induces
a certified stability region of radius at least $\gamma_s / (2L)$ around
each repaired representation.

\section{Experiments}
\label{sec:experiments}

\subsection{Experimental Setup}
\label{sec:setup}

\textbf{Datasets} \label{sec:datasets} All evaluation sets are defined with respect to the post-GSN-FT weights
$W^{(0)}$: by construction, the GSN-FT model achieves 0\% repair accuracy,
100\% remain accuracy, and 100\% general accuracy on these sets.
The repair set is drawn from the SST-2~\citep{socher-etal-2013-recursive} and RTE~\citep{dagan2005pascal} tasks in AdvGLUE~\citep{wang2021adversarial} (development partition), retaining six
attack methods (CheckList, SememePSO, StressTest, T3, TextBugger,
TextFooler); only samples mis-classified by the post-GSN-FT model are
included. The attack generalization set is drawn from the disjoint AdvGLUE
test partition under the same six methods.
Table~\ref{tab:dataset_configs} summarises set sizes; hyperparameters are
in Appendix~\ref{app:hyperparams}.

\begin{table}[h]
  \centering
  \caption{Dataset configurations.
    $|\mathcal{S}|$: repair set;
    $|\mathcal{P}|$: remain set;
    $|\text{Gen}|$: general accuracy evaluation set;
    $|\text{AtkGen}|$: attack generalization set.
    $\dagger$~Yelp Polarity test split~\citep{zhang2015character}.
    $\ddagger$~RTE training split.}
  \label{tab:dataset_configs}
  \resizebox{0.5\textwidth}{!}{%
  \begin{tabular}{llcccc}
    \toprule
    Task & Model & $|\mathcal{S}|$ & $|\mathcal{P}|$ & $|\text{Gen}|$ & $|\text{AtkGen}|$ \\
    \midrule
    \multirow{2}{*}{SST-2}
      & DistilBERT & 108 & 800$^\dagger$ & 2{,}000$^\dagger$ & 1{,}207 \\
      & BERT       & 105 & 800$^\dagger$ & 2{,}000$^\dagger$ & 1{,}207 \\
    \midrule
    \multirow{2}{*}{RTE}
      & DistilBERT & 47  & 200$^\ddagger$ & 500$^\ddagger$ & 222 \\
      & BERT       & 50  & 200$^\ddagger$ & 500$^\ddagger$ & 222 \\
    \bottomrule
  \end{tabular}}
\end{table}

\textbf{Models} \label{sec:models} We evaluate WARP on two encoder-only Transformer models: DistilBERT~\citep{sanh2019distilbert} and BERT~\citep{devlin2019bert}, across two tasks: SST-2 sentiment classification and RTE textual entailment. For each task, we use fine-tuned checkpoints sourced from HuggingFace. 
For SST-2, we use the checkpoints
\texttt{distilbert-base-uncased\allowbreak-finetuned\allowbreak-sst-2\allowbreak-english} for DistilBERT and \texttt{textattack/bert-base-uncased-SST-2} for BERT.
For RTE, we use \texttt{textattack/distilbert-base-uncased-RTE} and \texttt{textattack/bert-base-uncased-RTE} respectively.
The final classification head $W_c \in \mathbb{R}^{2\times768}$ is frozen throughout repair, as we focus on correcting the encoder representations rather than retraining the decision boundary. Table~\ref{tab:model_configs} summarises the key properties of each model.

\begin{table}[h]
  \centering
  \caption{Model configurations. GSN $\kappa$ is measured at initialisation
           (pre-GSN-FT). Type~A repair layers are fully updated during
           downstream fine-tuning (high initial $\kappa$); Type~B receive
           minimal task-specific gradient updates (low initial $\kappa$).}
  \label{tab:model_configs}
  \resizebox{\textwidth}{!}{%
  \begin{tabular}{llllcc}
    \toprule
    Task & Model & Repair Layer  & Activation Function & Type & Init.\ $\kappa$ \\
    \midrule
    \multirow{2}{*}{SST-2}
      & DistilBERT
        & \texttt{pre\_classifier} ($768\!\times\!768$)  & \texttt{ReLU} & A & 5.43 \\
      & BERT
        & \texttt{bert.pooler.dense} ($768\!\times\!768$) 
        & \texttt{tanh}
        & B & 0.58 \\
    \midrule
    \multirow{2}{*}{RTE}
      & DistilBERT 
        & \texttt{pre\_classifier} ($768\!\times\!768$)  & \texttt{ReLU} & A & 4.79 \\
      & BERT 
        & \texttt{bert.pooler.dense} ($768\!\times\!768$)  & \texttt{tanh} & B & 0.52 \\
    \bottomrule
  \end{tabular}}
\end{table}

\textbf{Baselines} \label{sec:baselines} We compare against two gradient-based methods trained on the union of repair and remain sets, both minimizing cross-entropy loss.
\textbf{Same-layer LoRA ($r\!=\!2$):} a rank-2 adapter ($\alpha=4$)
injected into the repair layer, identical solution-space dimensionality
(3{,}072 parameters) as WARP, but optimized via gradient descent.
\textbf{All-layers Full FT:} all layers unfrozen for end-to-end
fine-tuning, serving as the strongest gradient-method upper bound.
All baselines use learning rate $1\times10^{-4}$, batch size 16,
maximum 300 epochs, and early stopping (patience 5, threshold $10^{-6}$).

\subsection{Main Results}
\label{sec:main_results}

\begin{table}[t]
  \centering
  \caption{Main experimental results on SST-2 and RTE.
    ``Repair Acc'' is the accuracy on the repair set $\mathcal{S}$;
    ``Remain Acc'' on the remain set $\mathcal{P}$;
    ``General Acc'' on the held-out evaluation set;
    ``Atk.\ Gen'' is the mean accuracy across six adversarial attack methods
    (higher is better for all metrics).
    All sets are defined with respect to the post-GSN-FT weights;
    see \S\ref{sec:datasets}.}
  \label{tab:main_results}
  \resizebox{\textwidth}{!}{%
  \begin{tabular}{llcccc}
    \toprule
    Model & Method & Repair Acc $\uparrow$ & Remain Acc $\uparrow$
          & General Acc $\uparrow$ & Atk.\ Gen $\uparrow$ \\
    \midrule
    \multicolumn{6}{l}{\textit{SST-2 (sentiment classification)}} \\
    \midrule
    \multirow{4}{*}{DistilBERT}
      & Original                    & 5.6\%            & 98.1\%           & \textbf{98.2\%}  & 22.3\% \\
      & Same-layer LoRA ($r\!=\!2$) & 75.9\%           & 99.0\%           & 97.5\%  & 56.7\% \\
      & All-layers Full FT          & \textbf{100.0\%}          & 99.5\%           & 93.6\%  & 50.2\% \\
      & \textbf{WARP (ours)}        & \textbf{100.0\%} & \textbf{100.0\%} & 91.4\%  & \textbf{69.0\%} \\
    \midrule
    \multirow{4}{*}{BERT}
      & Original                    & 1.0\%            & 98.2\%           & \textbf{98.5\%}  & 23.2\% \\
      & Same-layer LoRA ($r\!=\!2$) & 91.9\%           & 99.6\%           & 98.0\%  & 60.0\% \\
      & All-layers Full FT          & \textbf{100.0\%}          & 99.5\%           & 93.8\%  & 56.7\% \\
      & \textbf{WARP (ours)}        & \textbf{100.0\%} & \textbf{100.0\%} & 92.7\%  & \textbf{64.7\%} \\
    \midrule
    \multicolumn{6}{l}{\textit{RTE (recognizing textual entailment)}} \\
    \midrule
    \multirow{4}{*}{DistilBERT}
      & Original                    & 0.0\%            & \textbf{100.0\%}          & \textbf{99.2\%}  & 37.4\% \\
      & Same-layer LoRA ($r\!=\!2$) & 62.7\%           & 97.5\%           & 97.0\%  & 32.3\% \\
      & All-layers Full FT          & \textbf{100.0\%}          & \textbf{100.0\%}          & 80.8\%  & 36.3\% \\
      & \textbf{WARP (ours)}        & \textbf{100.0\%} & \textbf{100.0\%} & 83.8\%  & \textbf{44.1\%} \\
    \midrule
    \multirow{4}{*}{BERT}
      & Original                    & 10.0\%           & 97.5\%           & \textbf{99.0\%}  & 39.3\% \\
      & Same-layer LoRA ($r\!=\!2$) & 45.5\%           & 98.0\%           & \textbf{99.0\%}  & 40.6\% \\
      & All-layers Full FT          & \textbf{100.0\%}          & 98.0\%           & 87.4\%  & 40.1\% \\
      & \textbf{WARP (ours)}        & \textbf{100.0\%} & \textbf{100.0\%} & 86.2\%  & \textbf{47.9\%} \\
    \bottomrule
  \end{tabular}}
\end{table}

As shown in Table~\ref{tab:main_results}, WARP achieves 100\% repair and 100\% remain accuracy across all four
configurations. These outcomes are consistent with the guarantees
established in Theorems~\ref{thm:repair}--\ref{thm:remain}.
Gradient-based baselines fall substantially short on repair: same-layer
LoRA reaches only 75.9\%/91.9\% on SST-2 and degrades further on RTE
(62.7\%/45.5\%), suggesting that the QP-based optimization can utilize a fixed parameter
budget more effectively than gradient descent in this setting.
On general accuracy, WARP trails same-layer LoRA on SST-2 (91.4\%/92.7\%
vs.\ 97.5\%/98.0\%) due to the rank-2 single-layer constraint, but
outperforms All-layers Full FT on RTE DistilBERT (83.8\% vs.\ 80.8\%),
where rank-2 regularization is particularly effective with limited clean data.
WARP achieves the best attack generalization across all configurations,
outperforming same-layer LoRA by up to 12.3 points on SST-2 and All-layers
Full FT by 7.8 points on RTE (per-attack breakdown in
Appendix~\ref{app:full_baselines}), which we attribute to the combination of high repair accuracy and the
implicit regularization induced by the QP formulation.
The controlled comparison against same-layer LoRA, which shares identical
solution-space dimensionality (3{,}072 decision variables) but is optimized
via gradient descent rather than QP, supports the view that this advantage
is associated with the optimization procedure rather than differences in
parameter count.

\subsection{Repair Certificates}
\label{sec:certificates}

Table~\ref{tab:certificates} reports certificate verification across all
four configurations.
All repaired samples achieve gap $\geq \gamma_s = 1.0$ (Cert.~1);
remain samples satisfy $\gamma_h = 0.3$ on RTE, with 798/800 and 786/800
on SST-2, where the few exceptions remain correctly classified with margins
marginally below the threshold, arising from the first-order approximation
in the remain constraints (Cert.~2; see Remark~\ref{rem:remain}).
Type~A (DistilBERT) yields substantially wider robustness radii than
Type~B (BERT), with mean $\varepsilon^* = 0.222$ vs.\ $0.020$ on SST-2
and median $0.305$ vs.\ $0.028$ on RTE. This difference is due to the larger weight
displacement required by Type~B, which inflates the post-repair spectral
norm and hence $L$. The Lipschitz bound is sound but conservative, with
median tightness $25\times$ for Type~A and $250$--$400\times$ for
Type~B (Cert.~3). Despite this conservativeness, the gap between the certified radius and the empirical first-flip boundary is itself quantifiable: Monte Carlo stress tests show that label flips first occur at $25\times$--$400\times$ the certified radius, indicating that the true robustness of repaired samples substantially exceeds the formal guarantee. This conservativeness is attributable to the Lipschitz upper bound rather than the repair procedure itself, and tighter Lipschitz estimation methods such as SeqLip~\citep{virmaux2018lipschitz} could be substituted into Eq.~\eqref{eq:lipschitz} to improve certificate quality without modifying the framework. See Appendix~\ref{app:stress_test} for full results.

\begin{table}[t]
  \centering
  \caption{Repair certificate verification (WARP, after QP repair).}
  \label{tab:certificates}
  \resizebox{\textwidth}{!}{%
  \begin{tabular}{llcccc}
    \toprule
    & & \multicolumn{2}{c}{SST-2} & \multicolumn{2}{c}{RTE} \\
    \cmidrule(lr){3-4}\cmidrule(lr){5-6}
    & & DistilBERT & BERT & DistilBERT & BERT \\
    \midrule
    \multirow{2}{*}{\textit{Cert.~1}}
      & Samples with gap $\geq \gamma_s$
        & 108/108 & 105/105 & 47/47 & 50/50 \\
      & gap min / mean / max
        & 1.000/1.291/2.360 & 1.000/1.431/2.814
        & 1.000/1.244/2.662 & 1.000/1.866/4.843 \\
    \midrule
    \multirow{2}{*}{\textit{Cert.~2}}
      & Samples with remain gap $\geq \gamma_h$
        & 798/800 & 786/800 & 200/200 & 200/200 \\
      & remain gap min / mean
        & 0.300/1.059 & 0.300/1.915 & 0.300/0.946 & 0.300/0.951 \\
    \midrule
    \textit{Cert.~3}
      & $\varepsilon^*$ min/median/mean/max
        & 0.172/0.211/0.222/0.405 & 0.014/0.016/0.020/0.039
        & 0.283/0.305/0.352/0.753 & 0.022/0.028/0.041/0.106 \\
    \bottomrule
  \end{tabular}}
\end{table}

\subsection{GSN-FT Analysis}
\label{sec:gsn_analysis}

GSN-FT's sole purpose is to increase $\kappa$ and ensure QP feasibility;
it is not a repair step.
Table~\ref{tab:gsn} confirms this: $\kappa$ rises substantially while
attack generalization remains essentially unchanged
(DistilBERT SST-2: 22.3\%$\to$20.9\%; BERT SST-2: 23.2\%$\to$22.5\%),
and the BERT RTE increase (39.3\%$\to$42.8\%) remains far below the
post-QP result of 47.9\%. These results indicate that generalization gains
originate from the QP step, rather than preprocessing.

\begin{table}[H]
  \centering
  \caption{Effect of GSN-FT. Attack generalization is reported before
           and after GSN-FT to confirm that generalization gains originate
           from the QP step rather than preprocessing.}
  \label{tab:gsn}
  \resizebox{0.9\textwidth}{!}{%
  \begin{tabular}{lcccccccc}
    \toprule
    & \multicolumn{4}{c}{SST-2} & \multicolumn{4}{c}{RTE} \\
    \cmidrule(lr){2-5}\cmidrule(lr){6-9}
    & \multicolumn{2}{c}{DistilBERT} & \multicolumn{2}{c}{BERT}
    & \multicolumn{2}{c}{DistilBERT} & \multicolumn{2}{c}{BERT} \\
    \cmidrule(lr){2-3}\cmidrule(lr){4-5}\cmidrule(lr){6-7}\cmidrule(lr){8-9}
    & Pre & Post & Pre & Post & Pre & Post & Pre & Post \\
    \midrule
    GSN $\kappa$  & 5.43  & 7.43  & 0.58  & 1.31  & 4.79  & 8.48  & 0.52  & 1.60  \\
    Atk.\ Gen     & 22.3\%& 20.9\%& 23.2\%& 22.5\%& 37.4\%& 37.7\%& 39.3\%& 42.8\%\\
    Best step     & ---   & 4     & ---   & 20    & ---   & 23    & ---   & 27    \\
    $\|W\|_2$     & 1.506 & 2.238 & 13.408& 13.423& 1.140 & 2.949 & 13.413& 13.558\\
    QP iters      & ---   & 5     & ---   & 6     & ---   & 4     & ---   & 6     \\
    \bottomrule
  \end{tabular}}
\end{table}

\subsection{Ablation Studies}
\label{sec:ablation}

Rank $r=2$ is selected as the default: attack generalization is
near-insensitive to rank on DistilBERT (spread ${<}1\%$ on SST-2,
${<}2\%$ on RTE), while BERT degrades above rank~2 and is infeasible
at rank~1 on RTE (full results in Appendix~\ref{app:ablation}).
Larger repair sets consistently improve attack generalization
($+36\%$/$+35\%$ on SST-2, $+8\%$/$+10\%$ on RTE) at a modest
general accuracy cost of $-5\%$ to $-10\%$.
The remain set is a necessary component: removing it collapses general
accuracy to near zero, while increasing $|\mathcal{P}|$ trades attack
generalization for general accuracy
(Tables~\ref{tab:rank_ablation}--\ref{tab:remain_size} in
Appendix~\ref{app:ablation}).

\section{Conclusion}
We presented WARP, a provable repair framework that extends provable
repair beyond the last layer of NLP Transformers. The repair is formulated as a convex quadratic
program that provides three verifiable per-sample guarantees: a pointwise
gap guarantee, a remain protection guarantee, and a Lipschitz robustness
radius.

Experiments on two encoder-only architectures show that WARP achieves
100\% repair and remain accuracy and outperforms gradient-based baselines
on attack generalization while providing per-sample certificates.
Because the formulation requires only a pre-activation dense layer followed
by a linear output head, and such structures are prevalent in generative
models, exploring whether WARP can be adapted to generative tasks, where
the notion of a classification margin may need to be redefined, is a
promising direction for future work.



\bibliography{colm2026_conference}
\bibliographystyle{colm2026_conference}

\appendix

\section{Iterative Repair Algorithm}
\label{app:algorithm}

Algorithm~\ref{alg:repair} details the iterative QP repair procedure
outlined in \S\ref{sec:iterative}.

\begin{algorithm}[H]
\caption{Iterative Rank-$r$ QP Repair}
\label{alg:repair}
\KwIn{%
  Weights $(W^{(0)}, b, W_c, b_c)$;
  repair set $\mathcal{S}$;
  remain set $\mathcal{P}$;
  hyperparameters $r, \lambda, \rho, \gamma_s, \gamma_h$;
  maximum iterations $T$%
}
\KwOut{Repaired weight $\widetilde{W}$}
\For{$t = 1, 2, \ldots, T$}{
  Compute $U^{(t)}$ via truncated SVD of $W^{(t-1)}$\;
  Construct coefficient vectors $\{a_i\}$ and $\{p_p\}$
    via Eqs.~\eqref{eq:ai_def}--\eqref{eq:pp_def}\;
  Solve QP~\eqref{eq:qp_obj}--\eqref{eq:c3} to obtain $\beta^*$\;
  $W^{(t)} \leftarrow W^{(t-1)}
    + U^{(t)}\,\operatorname{reshape}(\beta^*,\,r,\,d_{\mathrm{in}})$\;
  \If{$\mathrm{gap}(v_i,\, y_i;\, W^{(t)}) \geq \gamma_s
      \;\;\forall\, i \in [n_s]$}{
    \textbf{break}\;
  }
}
\Return{$\widetilde{W} = W^{(t)}$}
\end{algorithm}

\section{Proofs of Theorems~\ref{thm:repair},~\ref{thm:remain},
and~\ref{thm:lipschitz}}
\label{app:proofs}

\begin{theorem}[Repair Set Gap, restated]
If Algorithm~\ref{alg:repair} converges before iteration $T$, then
$\mathrm{gap}(v_i, y_i;\,\widetilde{W})\geq\gamma_s$
for all $i\in\mathcal{S}$.
\end{theorem}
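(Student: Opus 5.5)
The argument reads the guarantee off the stopping rule of Algorithm~\ref{alg:repair}; no property of the QP solution or of the linearization in Eq.~\eqref{eq:gap_linear} is needed. The first step is to fix what ``converges before iteration $T$'' means. The loop has two exits: the \textbf{break}, which fires at some $t^\star \le T$ once the check after the weight update succeeds, and exhaustion of the loop without a break. I take convergence to mean the first exit. So there is an index $t^\star$ at which the \textbf{If} condition evaluated to true, and the algorithm returns $\widetilde{W} = W^{(t^\star)}$.

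The second step unpacks the termination test. At iteration $t^\star$ the algorithm computes the updated weight $W^{(t^\star)} = W^{(t^\star-1)} + U^{(t^\star)}\operatorname{reshape}(\beta^*, r, d_{\mathrm{in}})$. It then evaluates $\mathrm{gap}(v_i, y_i; W^{(t^\star)})$ for every $i \in [n_s]$, using the exact gap of Eq.~\eqref{eq:gap} on the true network, not the linearized surrogate. The break fires only if each of these values is at least $\gamma_s$. Because the returned weight is the same matrix that passed the test, substitution gives $\mathrm{gap}(v_i, y_i; \widetilde{W}) \ge \gamma_s$ for all $i \in \mathcal{S}$.

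Two points deserve explicit remarks.
\begin{itemize}
\item The slack variables $\xi_i$ in~\eqref{eq:c1} and any higher-order linearization error play no role. Whatever the QP returns, the certificate comes from an exact post-update evaluation. This is why the result does not carry the first-order caveat of Remark~\ref{rem:remain}.
\item The index set $[n_s]$ in the algorithm coincides with $\mathcal{S} = \{(v_i, y_i)\}_{i=1}^{n_s}$, and the fixed quantities $b, W_c, b_c$ used in the check are the same ones defining $s_c(v; \widetilde{W})$.
\end{itemize}

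The only real obstacle is interpretive: ``converges before iteration $T$'' must not be confused with running out of iterations. If the loop exhausts without breaking, it still returns $W^{(T)}$, and nothing is guaranteed about that weight. I would make this explicit by noting that the hypothesis excludes that branch. The theorem is therefore conditional and makes no claim that the QP iteration is guaranteed to terminate.
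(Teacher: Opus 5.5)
Your proposal is correct and matches the paper's proof: both read the inequality directly off the termination test of Algorithm~\ref{alg:repair}, which evaluates the exact gap on the very weight $\widetilde{W}=W^{(t^\star)}$ that is returned. Your remarks that the slack variables and linearization error play no role, and that the hypothesis excludes the loop-exhaustion branch, are accurate clarifications that the paper leaves implicit.
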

\begin{proof}
The algorithm terminates only when the convergence condition
$\mathrm{gap}(v_i, y_i;\,W^{(t)}) \geq \gamma_s$ for all $i \in \mathcal{S}$
is satisfied, which directly encodes this inequality for every $i \in \mathcal{S}$.
\end{proof}

\begin{theorem}[Remain Set Gap, restated]
At any iterate $W^{(t)}$ produced by Algorithm~\ref{alg:repair},
if the QP constraints~\eqref{eq:c2} are satisfied, then
$\mathrm{gap}(v_p,\,\hat{y}_p^{(0)};\,W^{(t)}) \geq \gamma_h$
for all $p \in \mathcal{P}$.
\end{theorem}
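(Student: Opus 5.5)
The proof is a short chain. We rewrite constraint~\eqref{eq:c2} as a lower bound on the linearized gap. We then transfer that bound to the true gap at the new iterate via the first-order model~\eqref{eq:gap_linear}, as Remark~\ref{rem:remain} indicates. Fix a step $t$ of Algorithm~\ref{alg:repair}. Let $\beta^*$ be the QP solution computed at the previous iterate $W^{(t-1)}$, and let $\Delta W = U^{(t)}\operatorname{reshape}(\beta^*,r,d_{\mathrm{in}})$, so that $W^{(t)} = W^{(t-1)}+\Delta W$. The hypothesis that~\eqref{eq:c2} holds at $\beta^*$ gives, for every $p\in\mathcal{P}$,
\[
\mathrm{gap}(v_p,\hat y_p^{(0)};W^{(t-1)}) + \langle p_p,\beta^*\rangle \;\geq\; \gamma_h .
\]
This is the linearized gap at $W^{(t)}$. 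It remains to identify this quantity with $\mathrm{gap}(v_p,\hat y_p^{(0)};W^{(t)})$.

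The second step is to verify that $\langle p_p,\beta\rangle$ is exactly the directional derivative of the gap along $\Delta W = U^{(t)}B$, with the competing class frozen at $\hat c_p$. First, $s_y - s_{\hat c} = (w_y-w_{\hat c})^\top \sigma(Wv+b) + \text{const}$. Differentiating in $W$ along $\Delta W$ gives $(w_y-w_{\hat c})^\top J(v)\,U^{(t)} B v$. Using the identity $x^\top B v = \langle x v^\top, B\rangle_F = \langle \operatorname{vec}(x v^\top),\operatorname{vec}(B)\rangle$ with $x = U^{(t)\top}J(v)(w_y-w_{\hat c})$ recovers~\eqref{eq:pp_def}. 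Hence~\eqref{eq:gap_linear} is precisely the first-order Taylor expansion of the gap, and
\[
\mathrm{gap}(v_p,\hat y_p^{(0)};W^{(t)}) = \mathrm{gap}(v_p,\hat y_p^{(0)};W^{(t-1)}) + \langle p_p,\beta^*\rangle + R_p ,
\]
where the remainder $R_p$ is $O(\|\Delta W\|^2)$ for smooth $\sigma$.

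A subtlety is that the true gap uses $\max_{c\neq y}$, which can switch away from $\hat c_p$. Since the gap is a minimum over competitors, it is bounded above by the gap against $\hat c_p$. That bound runs in the wrong direction for a lower bound on the true gap. In the binary case ($C=2$), which covers all experiments, the competitor is unique, so this issue disappears. In general I would handle it by the same first-order convention: to first order, the active competitor stays fixed for small steps.

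The main obstacle is the remainder $R_p$. It cannot be shown to be nonnegative in general, and for ReLU the map $\sigma$ is not even differentiable everywhere. The statement is therefore only provable in the sense of Remark~\ref{rem:remain}: under the standing first-order approximation, where~\eqref{eq:gap_linear} is treated as an equality, dropping $R_p$ yields $\mathrm{gap}(v_p,\hat y_p^{(0)};W^{(t)})\geq\gamma_h$ for all $p$. I would state this assumption explicitly in the proof. I would also note that the approximation error is $O(\|\Delta W\|^2)$, which the regularizer $\tfrac{\rho}{2}\|\beta\|^2$ keeps small because $\|\Delta W\|_F=\|\beta\|_2$ when $U^{(t)}$ has orthonormal columns. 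An exact version would require a margin slack of size $\tfrac12 M\|\beta^*\|^2$, where $M$ bounds the curvature, and the plan is to mention this without proving it. Finally, the bound is checked exactly on the true network after convergence, as in Table~\ref{tab:certificates}.
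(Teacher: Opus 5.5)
Your proposal is correct and follows the same route as the paper's proof: read \eqref{eq:c2} as a lower bound $\gamma_h$ on the linearized gap at $W^{(t)}$, then treat the first-order approximation \eqref{eq:gap_linear} as an equality to transfer that bound to the true gap. Your extra checks make explicit the conditional nature that the paper leaves to Remark~\ref{rem:remain}: that $p_p$ is exactly the directional derivative, that the competing class can switch when $C>2$, and that the remainder $R_p$ has no controlled sign.
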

\begin{proof}
Constraint~\eqref{eq:c2} requires
$\langle p_p, \beta \rangle \geq \gamma_h -
\mathrm{gap}(v_p, \hat{y}_p; W^{(t-1)})$
at every iteration. Substituting the first-order
approximation~\eqref{eq:gap_linear}, this is equivalent to enforcing
$\mathrm{gap}(v_p, \hat{y}_p; W^{(t)}) \geq \gamma_h$ at each step.
\end{proof}

\begin{theorem}[Lipschitz Robustness Radius, restated]
For any repaired sample $v_i \in \mathcal{S}$, define
$\varepsilon_i^* = \mathrm{gap}(v_i, y_i;\,\widetilde{W}) / 2L$.
Then for any perturbation $\delta$ with $\|\delta\|_2 \leq
\varepsilon_i^*$, the predicted label of the repaired classifier at
$v_i + \delta$ remains $y_i$.
\end{theorem}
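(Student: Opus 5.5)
The plan is to use the Lipschitz bound on the gap stated in \S\ref{sec:robustness} directly, rather than bounding each logit separately. Write $g(u) = \mathrm{gap}(u, y_i;\,\widetilde{W})$ for the gap as a function of the repair-layer input $u$. The predicted label at $u$ equals $y_i$ exactly when $s_{y_i}(u) > \max_{c\neq y_i} s_c(u)$, that is, when $g(u) > 0$ (ties are excluded when the inequality is strict). So it suffices to show $g(v_i+\delta) > 0$ whenever $\|\delta\|_2 \le \varepsilon_i^*$.

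\textbf{Step 1 (Lipschitz continuity of the gap).} Each logit $s_c(u) = w_c^\top \sigma(\widetilde{W}u+b) + (b_c)_c$ is a composition of a linear map, the $L_\sigma$-Lipschitz activation, and a linear readout. The logit vector $s(u)$ therefore satisfies $\|s(u+\delta)-s(u)\|_2 \le \|W_c\|_2 L_\sigma \|\widetilde{W}\|_2 \|\delta\|_2 = L\|\delta\|_2$. For the gap, $s_{y}$ and $\max_{c\neq y} s_c$ each change by at most $\|s(u+\delta)-s(u)\|_\infty \le L\|\delta\|_2$, because a maximum of functions moves by at most the largest coordinate change. Hence $|g(v+\delta)-g(v)| \le 2L\|\delta\|_2$.

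This is where the factor $2$ in $\varepsilon_i^*$ comes from. It is also the step that needs care, since the paper's displayed bound $|g(v+\delta)-g(v)|\le L\|\delta\|_2$ is slightly looser in its derivation than it should be. I will rely only on the weaker constant $2L$, which is exactly what the definition of $\varepsilon_i^*$ accommodates. Alternatively, the displayed bound with constant $L$ can be taken as given from \S\ref{sec:robustness}, and then the $2L$ radius holds with room to spare.

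\textbf{Step 2 (conclude).} For $\|\delta\|_2 \le \varepsilon_i^*$, Step 1 gives
\[
g(v_i+\delta) \;\ge\; g(v_i) - 2L\|\delta\|_2 \;\ge\; g(v_i) - g(v_i) \;=\; 0.
\]
To get strict positivity, use the inequality from the displayed bound with constant $L$: $g(v_i+\delta)\ge g(v_i) - L\varepsilon_i^* = g(v_i)/2 \ge \gamma_s/2 > 0$. Here $g(v_i)\ge\gamma_s>0$ by Theorem~\ref{thm:repair}. If only the $2L$ constant is available, the boundary case $\|\delta\|_2=\varepsilon_i^*$ can give $g=0$, which is a tie. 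In that case I would state the conclusion for $\|\delta\|_2<\varepsilon_i^*$, or break ties toward $y_i$.

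Either way, positivity of the gap means $y_i$ strictly dominates every competing logit, so $\arg\max_c s_c(v_i+\delta) = y_i$. I expect the main obstacle to be Step 1: keeping track of whether the per-logit Lipschitz constant is $L\|w_c\|$ or $L$, and justifying the max-of-Lipschitz argument.
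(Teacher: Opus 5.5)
Your fallback in Step 2, which takes the displayed constant-$L$ bound as given and gets $g(v_i+\delta)\ge g(v_i)-L\|\delta\|_2\ge g(v_i)/2>0$, is exactly the paper's proof. Your suspicion of that bound is justified, though: it is false in general. Take $d_{\mathrm{in}}=d_{\mathrm{out}}=1$, $\widetilde W=1$, $b=1$, $\sigma=\mathrm{ReLU}$, $W_c=(1,-1)^\top$, $b_c=0$, $y=1$. Then $L=\|W_c\|_2=\sqrt2$, yet $\mathrm{gap}(u,1;\widetilde W)=2(u+1)$ for $u\ge -1$, which has slope $2=\sqrt2\,L>L$. The same happens whenever $C=2$ and $w_2=-w_1$, since then $\|w_1-w_2\|_2=\sqrt2\,\|W_c\|_2$. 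So that fallback does not close the argument. Your own rigorous $2L$ route only gives $g(v_i+\delta)\ge 0$ on the closed ball, which leaves a possible tie at $\|\delta\|_2=\varepsilon_i^*$. Restricting to the open ball, or breaking ties toward $y_i$, proves a different statement. As written, neither of your options establishes the theorem.

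The missing step is a sharper constant. Your $2L$ is loose by a factor $\sqrt2$, which you lose by bounding $s_{y_i}$ and $\max_{c\ne y_i}s_c$ separately in $\ell_\infty$. Instead, bound the pair jointly in $\ell_2$. Write $g(u)=\mathrm{gap}(u,y_i;\widetilde W)$ and $\Delta s=s(v_i+\delta)-s(v_i)$, and let $c'$ be the strongest competitor at the perturbed point. Since $\max_{c\neq y_i}s_c(v_i)\ge s_{c'}(v_i)$,
\[
g(v_i+\delta)\;\ge\; g(v_i)+\Delta s_{y_i}-\Delta s_{c'}\;\ge\; g(v_i)-\sqrt{2}\,\|\Delta s\|_2\;\ge\; g(v_i)-\sqrt{2}\,L\,\|\delta\|_2 .
\]
The middle step uses $|a-b|\le\sqrt2\,(a^2+b^2)^{1/2}$, and the last uses your Step 1 estimate $\|\Delta s\|_2\le L\|\delta\|_2$. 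For $\|\delta\|_2\le\varepsilon_i^*$ this gives $g(v_i+\delta)\ge(1-\tfrac{\sqrt2}{2})\,g(v_i)>0$, because $g(v_i)\ge\gamma_s>0$. That is the strict conclusion on the closed ball. The counterexample shows $\sqrt2\,L$ is the sharp constant, so the theorem holds because of the factor $2$ in $\varepsilon_i^*$, not because of the displayed bound.
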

\begin{proof}
By the Lipschitz bound~\eqref{eq:lipschitz},
\begin{align*}
  \mathrm{gap}(v_i + \delta,\, y_i;\,\widetilde{W})
  &\;\geq\; \mathrm{gap}(v_i,\, y_i;\,\widetilde{W}) - L\,\|\delta\|_2 \\
  &\;\geq\; \mathrm{gap}(v_i,\, y_i;\,\widetilde{W})
            - \frac{\mathrm{gap}(v_i,\, y_i;\,\widetilde{W})}{2} \\
  &\;=\; \frac{\mathrm{gap}(v_i,\, y_i;\,\widetilde{W})}{2}
  \;>\; 0.
\end{align*}
A positive gap implies $s_{y_i} > \max_{c \neq y_i} s_c(v_i + \delta)$,
so the predicted label is $y_i$.
\end{proof}


\section{Hyperparameters}
\label{app:hyperparams}

Tables~\ref{tab:qp_hyperparams} and~\ref{tab:gsn_hyperparams} list the
hyperparameters used for QP repair and GSN-FT respectively. All values
are shared across the four task--model configurations.

\begin{table}[h]
  \centering
  \caption{QP repair hyperparameters.}
  \label{tab:qp_hyperparams}
  \begin{tabular}{clc}
    \toprule
    Symbol & Description & Value \\
    \midrule
    $r$         & Rank of weight perturbation         & 2    \\
    $\gamma_s$  & Repair set gap margin target         & 1.0  \\
    $\gamma_h$  & Remain set gap margin lower bound    & 0.3  \\
    $\lambda$   & Slack penalty coefficient            & 50.0 \\
    $\rho$      & regularization coefficient  & 2.0  \\
    $T$         & Maximum iterations                   & 300  \\
    ---         & QP solver                            & OSQP~\citep{stellato2020osqp} \\
    \bottomrule
  \end{tabular}
\end{table}

\begin{table}[h]
  \centering
  \caption{GSN-FT hyperparameters.}
  \label{tab:gsn_hyperparams}
  \resizebox{\textwidth}{!}{%
  \begin{tabular}{lc}
    \toprule
    Parameter & Value \\
    \midrule
    Max steps $T_{\mathrm{FT}}$    & 30 \\
    Learning rate                  & $1\times10^{-3}$ \\
    Batch size                     & 32 \\
    Auxiliary set size $N_{\mathrm{aux}}$ & 8 \\
    Trainable parameters           & repair layer $W$ + classification head $W_c$ (592{,}130 total) \\
    Checkpoint selection criterion & step with maximum GSN $\kappa$ \\
    \bottomrule
  \end{tabular}}
\end{table}

\section{Monte Carlo Stress Test and Proximity Analysis}
\label{app:stress_test}

To empirically assess the conservativeness of the Lipschitz robustness
radius $\varepsilon^*$, we conduct an expanding-radius Monte Carlo stress
test: for each repaired sample, 1{,}000 perturbation vectors are drawn
uniformly from progressively expanding $\ell_2$-balls, and the smallest
radius multiplier at which the predicted label first flips is recorded.
Table~\ref{tab:mc_tightness} summarises the results.

\begin{table}[h]
  \centering
  \caption{Expanding-radius Monte Carlo stress test (tightness ratio:
           smallest multiplier $k$ such that a flip is observed within
           $k\times\varepsilon^*$).}
  \label{tab:mc_tightness}
  \resizebox{\textwidth}{!}{%
  \begin{tabular}{lcccccccc}
    \toprule
    & \multicolumn{4}{c}{SST-2} & \multicolumn{4}{c}{RTE} \\
    \cmidrule(lr){2-5}\cmidrule(lr){6-9}
    & \multicolumn{2}{c}{DistilBERT} & \multicolumn{2}{c}{BERT}
    & \multicolumn{2}{c}{DistilBERT} & \multicolumn{2}{c}{BERT} \\
    \cmidrule(lr){2-3}\cmidrule(lr){4-5}\cmidrule(lr){6-7}\cmidrule(lr){8-9}
    Multiplier $k$ & Flipped & Cum.\% & Flipped & Cum.\%
                   & Flipped & Cum.\% & Flipped & Cum.\% \\
    \midrule
    $\leq 1\times$   & 0/108   & 0.0\%   & 0/105   & 0.0\%   & 0/47  & 0.0\%   & 0/50  & 0.0\%   \\
    $\leq 20\times$  & 37/108  & 34.3\%  & 0/105   & 0.0\%   & 13/47 & 27.7\%  & 0/50  & 0.0\%   \\
    $\leq 25\times$  & 78/108  & 72.2\%  & 0/105   & 0.0\%   & 34/47 & 72.3\%  & 0/50  & 0.0\%   \\
    $\leq 35\times$  & 107/108     & 99.1\%     & 0/105   & 0.0\%   & 47/47 & 100.0\% & 0/50  & 0.0\%   \\
    $\leq 40\times$  & 108/108 & 100.0\% & 0/105   & 0.0\%   & ---   & ---     & 0/50  & 0.0\%   \\
    $\leq 150\times$ & ---     & ---     & 7/105   & 6.7\%   & ---   & ---     & 12/50 & 24.0\%  \\
    $\leq 250\times$ & ---     & ---     & 21/105     & 20.0\%     & ---   & ---     & 30/50 & 60.0\%  \\
    $\leq 400\times$ & ---     & ---     & 66/105  & 62.9\%  & ---   & ---     & 46/50 & 92.0\%  \\
    $\leq 600\times$ & ---     & ---     & 104/105     & 99.0\%     & ---   & ---     & 50/50 & 100.0\% \\
    $\leq 700\times$ & ---     & ---     & 105/105 & 100.0\% & ---   & ---     & ---   & ---     \\
    \midrule
    Median tightness & \multicolumn{2}{c}{$25\times$}
                     & \multicolumn{2}{c}{$400\times$}
                     & \multicolumn{2}{c}{$25\times$}
                     & \multicolumn{2}{c}{$250\times$} \\
    \bottomrule
  \end{tabular}}
\end{table}

The Type~A pattern is strikingly consistent across tasks: DistilBERT
exhibits a median tightness ratio of $25\times$ on both SST-2 and RTE,
with all samples flipping by $40\times$ (SST-2) and $35\times$ (RTE).
For Type~B BERT, the certificates are far more conservative: no flip occurs
below $90\times\varepsilon^*$ on RTE (vs.\ $150\times$ on SST-2), with
medians of $250\times$ and $400\times$ respectively.
These results confirm that the Lipschitz bound is sound but conservative,
and that its conservativeness is determined primarily by architecture type.

\paragraph{Real adversarial sample proximity analysis.}
Table~\ref{tab:proximity} reports the accuracy of the repaired DistilBERT
model on SST-2 adversarial samples grouped by their distance to the nearest
repaired embedding, expressed as a multiple of $\varepsilon^*$.
The median distance ratio is $36.9\times$ and accuracy decreases
monotonically with distance, mirroring the $20$--$25\times$ first-flip
boundary from the stress test.
For RTE DistilBERT, the median distance ratio is $44.1\times$, consistent
with the stress test boundary, though the monotone trend is weaker owing to
the smaller number of attack samples per distance band.
For both BERT configurations, all real adversarial samples fall well beyond
the stress-test flip boundary ($150\times$ / $90\times$), a direct
structural consequence of the Type~B architecture inflating $L$ and
compressing $\varepsilon^*$.

\begin{table}[h]
  \centering
  \caption{Accuracy of the repaired DistilBERT model (SST-2) on adversarial
           samples grouped by distance to the nearest repaired embedding
           as a multiple of $\varepsilon^*$.}
  \label{tab:proximity}
  \begin{tabular}{lrr}
    \toprule
    Distance band ($d / \varepsilon^*_{\mathrm{nearest}}$) & Samples & Accuracy \\
    \midrule
    $\leq 20\times$        &    5 & 100.0\% \\
    $20\times$--$25\times$ &   49 &  87.8\% \\
    $25\times$--$30\times$ &  180 &  82.2\% \\
    $30\times$--$35\times$ &  267 &  74.5\% \\
    $35\times$--$40\times$ &  271 &  68.6\% \\
    $40\times$--$50\times$ &  324 &  60.5\% \\
    $> 50\times$           &  111 &  46.8\% \\
    \midrule
    Overall                & 1207 &  68.7\% \\
    \bottomrule
  \end{tabular}
\end{table}

\section{Per-Attack Generalization Breakdown}
\label{app:full_baselines}

Table~\ref{tab:per_attack} reports per-method attack generalization
broken down by individual attack type.

\begin{table}[h]
  \centering
  \caption{Per-attack-method generalization accuracy.}
  \label{tab:per_attack}
  \resizebox{\textwidth}{!}{%
  \begin{tabular}{lllccccccc}
    \toprule
    Task & Model & Method & CheckList & SememePSO & StressTest & T3 & TextBugger & TextFooler & Mean \\
    \midrule
    \multirow{8}{*}{SST-2}
      & \multirow{4}{*}{DistilBERT}
        & Original                    & 8.7\%  & 34.3\% & 6.4\%  & 24.2\% & 29.4\% & 30.9\% & 22.3\% \\
      & & Same-layer LoRA ($r\!=\!2$) & 41.0\% & 50.9\% & 89.1\% & 46.4\% & 55.8\% & 56.9\% & 56.7\% \\
      & & All-layers Full FT          & 59.3\% & 37.7\% & 73.7\% & 38.6\% & 46.2\% & 45.6\% & 50.2\% \\
      & & \textbf{WARP}               & \textbf{70.5\%} & \textbf{60.6\%} & \textbf{92.9\%} & \textbf{64.1\%} & \textbf{59.9\%} & \textbf{66.2\%} & \textbf{69.0\%} \\
    \cmidrule(lr){2-10}
      & \multirow{4}{*}{BERT}
        & Original                    & 8.7\%  & 34.3\% & 11.5\% & 26.8\% & 28.4\% & 29.4\% & 23.2\% \\
      & & Same-layer LoRA ($r\!=\!2$) & 45.7\% & 59.4\% & \textbf{80.8\%} & 62.1\% & 53.8\% & \textbf{58.3\%} & 60.0\% \\
      & & All-layers Full FT          & 62.7\% & 56.0\% & 64.7\% & 48.4\% & 51.8\% & 56.9\% & 56.7\% \\
      & & \textbf{WARP}               & \textbf{66.8\%} & \textbf{60.6\%} & \textbf{80.8\%} & \textbf{64.7\%} & \textbf{59.9\%} & 55.4\% & \textbf{64.7\%} \\
    \midrule
    \multirow{8}{*}{RTE}
      & \multirow{4}{*}{DistilBERT}
        & Original                    & 23.9\% & 29.6\% & 23.8\% & \textbf{60.0\%} & 47.7\% & 39.5\% & 37.4\% \\
      & & Same-layer LoRA ($r\!=\!2$) & 32.6\% & 18.5\% & 16.7\% & 55.0\% & 45.5\% & 25.6\% & 32.3\% \\
      & & All-layers Full FT          & 26.1\% & 33.3\% & 23.8\% & 45.0\% & 45.5\% & \textbf{44.2\%} & 36.3\% \\
      & & \textbf{WARP}               & \textbf{54.3\%} & \textbf{44.4\%} & \textbf{40.5\%} & 45.0\% & \textbf{54.5\%} & 25.6\% & \textbf{44.1\%} \\
    \cmidrule(lr){2-10}
      & \multirow{4}{*}{BERT}
        & Original                    & 23.9\% & 51.9\% & 19.0\% & 65.0\% & 31.8\% & 44.2\% & 39.3\% \\
      & & Same-layer LoRA ($r\!=\!2$) & 23.9\% & \textbf{59.3\%} & 16.7\% & \textbf{70.0\%} & 27.3\% & 46.5\% & 40.6\% \\
      & & All-layers Full FT          & 23.9\% & 44.4\% & 26.2\% & 40.0\% & 40.9\% & \textbf{65.1\%} & 40.1\% \\
      & & \textbf{WARP}               & \textbf{56.5\%} & 37.0\% & \textbf{50.0\%} & 45.0\% & \textbf{45.5\%} & 53.5\% & \textbf{47.9\%} \\
    \bottomrule
  \end{tabular}}
\end{table}

On SST-2, All-layers Full FT achieves 100\% repair accuracy but suffers the
worst attack generalization (50.2\% / 56.7\%), consistent with overfitting
tens of millions of parameters to fewer than 1{,}000 samples.
On RTE, the same pattern holds: Full FT matches WARP on repair accuracy but
trails on attack generalization (36.3\% / 40.1\% vs.\ 44.1\% / 47.9\%),
with no consistent advantage on any individual attack method.
WARP's rank-2, single-layer constraint acts as an implicit regularizer,
achieving the best mean attack generalization while providing formal
certificates that no gradient-based method can offer.

\section{Ablation Studies}
\label{app:ablation}

\subsection*{Effect of Rank $r$}

\begin{table}[h]
  \centering
  \caption{Rank ablation. SST-2: $|\mathcal{S}|=108/105$, $|\mathcal{P}|=800$.
    RTE: $|\mathcal{S}|=47/50$, $|\mathcal{P}|=200$.
    ``Conv'' denotes QP convergence iterations.
    $^*$QP infeasible at rank~1.}
  \label{tab:rank_ablation}
  \resizebox{\textwidth}{!}{%
  \begin{tabular}{lcccccccc}
    \toprule
    & \multicolumn{4}{c}{DistilBERT} & \multicolumn{4}{c}{BERT} \\
    \cmidrule(lr){2-5}\cmidrule(lr){6-9}
    Rank & Atk.\ Gen & Gen.\ Acc & Conv & Time
         & Atk.\ Gen & Gen.\ Acc & Conv & Time \\
    \midrule
    \multicolumn{9}{l}{\textit{SST-2}} \\
    \midrule
    1          & 68.0\% & 91.8\% & 4 & 34s  & 61.7\% & 95.1\% & 7       & 68s  \\
    2          & 69.0\% & 91.4\% & 5 & 33s & 64.7\% & 92.7\% & 6        & 77s \\
    8          & 68.1\% & 92.0\% & 5 & 160s & 59.2\% & 94.0\% & 4       & 108s \\
    32         & 68.7\% & 92.1\% & 8 & 989s & 60.6\% & 93.9\% & 4       & 535s \\
    \midrule
    \multicolumn{9}{l}{\textit{RTE}} \\
    \midrule
    1          & 43.4\% & 83.2\% & 3 & 5s   & 58.6\% & 1.2\%$^*$ & ---$^*$ & 6s  \\
    2          & 44.1\% & 83.8\% & 4 & 9s & 47.9\% & 86.2\% & 6           & 15s \\
    8          & 44.7\% & 76.2\% & 2 & 7s   & 50.3\% & 88.6\% & 4       & 23s  \\
    32         & 45.9\% & 80.8\% & 4 & 122s & 49.1\% & 88.2\% & 4       & 178s \\
    \bottomrule
  \end{tabular}}
\end{table}

As shown in Table~\ref{tab:rank_ablation}, on SST-2, rank has little impact on DistilBERT (spread ${<}1\%$) but
degrades BERT above rank~2 (59--61\% vs.\ 64.7\%).
On RTE, DistilBERT is similarly insensitive (spread ${<}2\%$), while
BERT peaks at rank~2 and degrades at higher ranks; rank~1 is infeasible
for BERT on RTE, further motivating rank~2 as the default.
The degradation at higher ranks reflects that peripheral singular
directions erode first-order approximation quality without a
compensating gain in expressiveness.

\subsection*{Effect of Repair and Remain Set Size}

\begin{table}[h]
  \centering
  \caption{Repair set size ablation ($|\mathcal{P}|$ fixed at 800 for SST-2,
    200 for RTE).}
  \label{tab:repair_size}
  \resizebox{\textwidth}{!}{%
  \begin{tabular}{lcccccccc}
    \toprule
    & \multicolumn{4}{c}{DistilBERT} & \multicolumn{4}{c}{BERT} \\
    \cmidrule(lr){2-5}\cmidrule(lr){6-9}
    $|\mathcal{S}|$ & \multicolumn{2}{c}{SST-2} & \multicolumn{2}{c}{RTE}
                    & \multicolumn{2}{c}{SST-2} & \multicolumn{2}{c}{RTE} \\
    \cmidrule(lr){2-3}\cmidrule(lr){4-5}\cmidrule(lr){6-7}\cmidrule(lr){8-9}
    & Atk.\ Gen & Gen.\ Acc & Atk.\ Gen & Gen.\ Acc
    & Atk.\ Gen & Gen.\ Acc & Atk.\ Gen & Gen.\ Acc \\
    \midrule
    10       & 33.2\% & 97.9\% & 36.4\% & 94.2\% & 29.2\% & 96.2\% & 38.2\% & 97.8\% \\
    25       & 51.0\% & 96.2\% & 39.2\% & 87.0\% & 42.0\% & 95.2\% & 45.9\% & 90.6\% \\
    50  & 60.5\% & 93.2\% & ---    & ---    & 60.6\% & 93.8\% & ---    & ---    \\
    75       & 65.6\% & 92.5\% & ---    & ---    & 62.5\% & 93.0\% & ---    & ---    \\
    Full     & 69.0\% & 91.4\% & 44.1\% & 83.8\% & 64.7\% & 92.7\% & 47.9\% & 86.2\% \\
    \bottomrule
  \end{tabular}}
\end{table}

\begin{table}[h]
  \centering
  \caption{Remain set size ablation ($|\mathcal{S}|$ fixed at full size).
    $\varepsilon^*_{\mathrm{med}}$ reported for DistilBERT only.
    \textbf{Bold} denotes the default setting.}
  \label{tab:remain_size}
  \resizebox{\textwidth}{!}{%
  \begin{tabular}{lcccccccc}
    \toprule
    & \multicolumn{3}{c}{DistilBERT (SST-2)} & \multicolumn{2}{c}{BERT (SST-2)}
    & \multicolumn{2}{c}{DistilBERT (RTE)} & \multicolumn{1}{c}{BERT (RTE)} \\
    \cmidrule(lr){2-4}\cmidrule(lr){5-6}\cmidrule(lr){7-8}\cmidrule(lr){9-9}
    $|\mathcal{P}|$ & Atk.\ Gen & Gen.\ Acc & $\varepsilon^*_{\mathrm{med}}$
                    & Atk.\ Gen & Gen.\ Acc
                    & Atk.\ Gen & Gen.\ Acc
                    & Atk.\ Gen \\
    \midrule
    0             & 79.5\% &  0.8\% & 1.057 & 76.4\% &  1.1\% & 56.5\% & 15.0\% & 58.3\% \\
    50            & 76.9\% & 52.9\% & 0.305 & 73.1\% & 50.7\% & 41.3\% & 61.8\% & 53.1\% \\
    200  & 72.0\% & 79.8\% & 0.255 & 70.7\% & 80.7\% & 44.1\% & 83.8\% & 47.9\% \\
    400           & 70.3\% & 86.5\% & 0.226 & 69.0\% & 89.0\% & ---    & ---    & ---    \\
    800  & 69.0\% & 91.4\% & 0.211 & 64.7\% & 92.7\% & ---    & ---    & ---    \\
    2400          & 64.3\% & 95.8\% & 0.138 & 60.8\% & 95.5\% & ---    & ---    & ---    \\
    \bottomrule
  \end{tabular}}
\end{table}

As shown in Table~\ref{tab:repair_size}, larger repair sets yield substantially better attack generalization: the
gain from smallest to full set is $+36\%$/$+35\%$ on SST-2 and
$+8\%$/$+10\%$ on RTE, at a modest general accuracy cost of $-5\%$
to $-10\%$.

Table~\ref{tab:remain_size} shows that the remain set governs the tension between general accuracy and attack
generalization: increasing $|\mathcal{P}|$ tightens remain constraints,
forcing $\Delta W$ into more conservative directions.
At $|\mathcal{P}|=0$, general accuracy collapses to near zero on both
tasks, confirming that the remain set is a necessary component of the
framework. Scaling up both sets together therefore yields simultaneous gains in attack generalization and general accuracy.

\end{document}